\newtheorem{theorem}{Theorem}[section] 
\title{A new perspective on classification: optimally allocating limited resources to uncertain tasks}
\author{Toon Vanderschueren\thanks{Corresponding author} \\
	KU Leuven, University of Antwerp\\
	\href{mailto:toon.vanderschueren@kuleuven.be}{\texttt{toon.vanderschueren@kuleuven.be}} \\
	\And
	Bart Baesens \\
	KU Leuven, University of Southampton \\
	\href{mailto:bart.baesens@kuleuven.be}{\texttt{bart.baesens@kuleuven.be}} \\
	\AND 
	Tim Verdonck \\
	University of Antwerp \\
	\href{mailto:tim.verdonck@uantwerpen.be}{\texttt{tim.verdonck@uantwerpen.be}} \\
	\And
	Wouter Verbeke \\
	KU Leuven \\
	\href{mailto:wouter.verbeke@kuleuven.be}{\texttt{wouter.verbeke@kuleuven.be}} \\
}
\date{}
\newcolumntype{H}{>{\setbox0=\hbox\bgroup}c<{\egroup}@{}}
\newcommand{\PreserveBackslash}[1]{\let\temp=\\#1\let\\=\temp}
\newcolumntype{C}[1]{>{\PreserveBackslash\centering}p{#1}}
\newcolumntype{R}[1]{>{\PreserveBackslash\raggedleft}p{#1}}
\newcolumntype{L}[1]{>{\PreserveBackslash\raggedright}p{#1}}
\begin{document}
\maketitle

\begin{abstract}
A central problem in business concerns the optimal allocation of limited resources to a set of available tasks, where the payoff of these tasks is inherently uncertain. In credit card fraud detection, for instance, a bank can only assign a small subset of transactions to their fraud investigations team. Typically, such problems are solved using a classification framework, where the focus is on predicting task outcomes given a set of characteristics. Resources are then allocated to the tasks that are predicted to be the most likely to succeed.  However, we argue that using classification to address task uncertainty is inherently suboptimal as it does not take into account the available capacity. Therefore, we first frame the problem as a type of assignment problem. Then, we present a novel solution using learning to rank by directly optimizing the assignment's expected profit given limited, stochastic capacity. This is achieved by optimizing a specific instance of the net discounted cumulative gain, a commonly used class of metrics in learning to rank. Empirically, we demonstrate that our new method achieves higher expected profit and expected precision compared to a classification approach for a wide variety of application areas and data sets. This illustrates the benefit of an integrated approach and of explicitly considering the available resources when learning a predictive model.
\end{abstract}

\keywords{Optimal resource allocation \and Classification \and Learning to rank}

\section{Introduction}

Optimally allocating limited resources is a central problem in economics \citep{samuelson2010economics} and operations research \citep{ward1957optimal, everett1963generalized}. It is often complicated further by uncertainty inherent to the considered problem. On the one hand, future resource capacity may be limited and not known exactly in advance. On the other hand, the tasks that require resources might have uncertain payoff. This situation is commonly encountered in various real-world applications. For example, in credit card fraud detection, fraud analysts can only investigate a limited number of transactions each day. Similarly, in direct marketing, a company may only be able to target a subset of customers in a marketing campaign. The challenge is how to optimally allocate resources to maximize business pay-off, e.g., how to optimally allocate fraud investigators to suspicious transactions to minimize losses due to fraud. By learning from historical data, machine learning models can assist decision-makers by predicting the most relevant tasks based on their characteristics.

Prior work addresses the problem of uncertain task outcomes via classification. The most promising tasks can be identified by estimating the probability of success for each task. The problem of stochastic, limited capacity can then be addressed separately in a second stage, when assignment decisions are made by prioritizing tasks based on the estimated probabilities to result in a successful outcome. In this article, however, we argue and demonstrate that this approach based on classification models is suboptimal when resources are limited because a classification model does not take capacity limitations into account. Hence, although only the most promising tasks can be executed, the model focuses equally on accurately predicting probabilities for tasks that are highly unlikely to be successful and, consequently, to be executed. 

Therefore, we propose a novel approach based on learning to rank that simultaneously accounts for both resource and task uncertainty.
When resources are limited, we demonstrate that this approach is superior to allocation based on classification. First, we show how learning to rank can directly optimize the assignment's expected profit given limited, stochastic capacity. By considering the available capacity during optimization, the model focuses on correctly ranking the most promising tasks, proportional to their likelihood of being processed under limited capacity. Second, while instances are processed individually in classification, learning to rank explicitly considers a task's relevance in comparison to the other available tasks. The benefit of this approach is that we only care about relative positions in the ranking, corresponding to the need to prioritize tasks relative to each other. 

Our contributions are threefold. First, we formalize the problem of allocating limited, stochastic resources to uncertain tasks by framing it as an assignment problem. Second, we propose a novel, integrated predict-and-optimize approach to solve this problem based on learning to rank. We contrast our approach with a two-stage predict-then-optimize framework that first uses a classification model to predict task outcomes and then solves the assignment problem using the predicted task probabilities. Third, we compare both methods empirically using various real life data sets from different application areas.

\section{Problem formulation}
\label{sec:problem formulation}

\begin{figure}[t]
    \includegraphics[width=\textwidth, trim={0 0.2cm 0 0}]{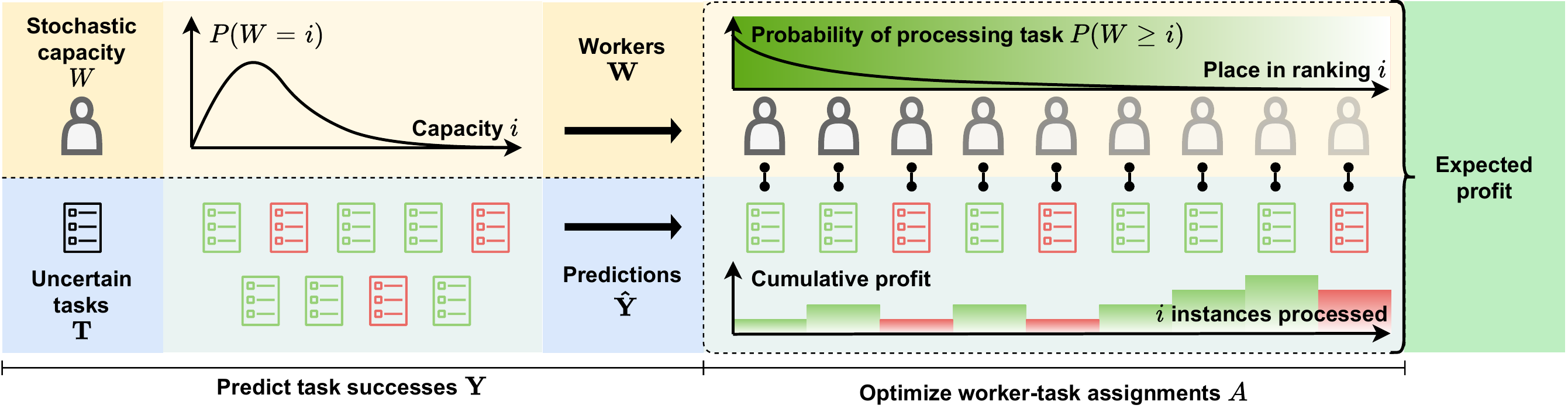}
    \caption{
    \textbf{Problem overview.} \normalfont{We formulate our setting as a type of linear assignment problem where two sources of uncertainty must be considered: stochastic worker capacity and uncertain task outcomes. To account for stochastic capacity in the assignment problem, the capacity distribution is converted to workers with decreasing processing probabilities. Task outcomes are also uncertain and need to be predicted. The ultimate objective is to assign workers to tasks to maximize the resulting expected profit.}
    }
    \label{fig:problem_overview}
\end{figure}

We address the problem of optimally assigning limited and stochastic resources to tasks with uncertain outcomes to maximize the expected profit. We formalize it as a linear assignment problem where both workers and tasks are sources of uncertainty. The exact number of workers is uncertain at the time when resources need to be allocated, but we assume it is governed by a known probability distribution. In practice, this distribution can be estimated from historical data on the available resources or based on domain knowledge. Alternatively, a deterministic capacity can be considered. Second, task outcomes are also uncertain and need to be predicted using historical data on similar tasks. A graphical overview of the problem is shown in \cref{fig:problem_overview}. In the following, we introduce and formally define each element of the assignment problem.

\paragraph{Stochastic capacity.}
The available resources or number of workers $W$ is a discrete random variable described by a known probability distribution. In this work, we consider a common situation where the expected capacity $\mathbb{E}(W)$ is smaller than the number of available tasks $N$. The stochastic capacity can be converted to a sequence of $N$ workers with monotonically decreasing success rates. This rate $w_i$ equals the worker's probability of being available given $W$ and is described by the complementary cumulative probability distribution function: $w_i = P(W \geq i) = 1 - F_W(i)$. This yields a monotonically decreasing sequence of $N$ worker success rates $\mathbf{W} = \begin{pmatrix} w_1 & \dots & w_N \end{pmatrix} = \{1-F_W(i)\}_{i=1}^N$ with $w_1 \geq \dotso \geq w_N$.

\paragraph{Uncertain tasks.}
There is also uncertainty regarding task outcomes. To address this uncertainty, we predict it using historical data on similar tasks. Let $\mathcal{T} = (\mathcal{X}, \mathcal{Y}, \mathcal{V})$ be the domain of all possible tasks $t_i = (\mathbf{x}_i, y_i, \mathbf{v}_i)$, where $\mathbf{x}_i \in \mathcal{X} \subset \mathbb{R}^d$ is a set of characteristics and $y_i \in \mathcal{Y} = \{0, 1\}$ is a binary label equal to 1 if the task is successful and 0 otherwise. Moreover, $\mathbf{v}_i = \{v^+_i, v^-_i\} \in \mathcal{V} \subset \mathbb{R}^2$ denotes the payoff if the task is executed, with $v^+_i$ if task $i$ was successful ($y_i = 1$) and $v^-_i$ otherwise. A task's reward is defined as $r_i = y_i v^+_i + (1-y_i)v^-_i$. We have $N$ available tasks to be allocated $\mathbf{T} = \{(\mathbf{x}_i, y_i, \mathbf{v}_i): i=1,\dots,N \}$, although $y_i$ is unknown when resources need to be allocated. Given historical data, a predictive model can estimate task outcomes $y_i$ resulting in $N$ predictions.

\paragraph{Matching workers and tasks.}
Workers and tasks can then be combined in an expected profit matrix $P = \begin{pmatrix} p_{ij} \end{pmatrix}$, where $p_{ij} = w_i r_j$ is the profit of assigning worker $i$ to task $j$ for $i,j=1,\dots,N$. Given $P$, the goal is to find the optimal assignment matrix $A = \begin{pmatrix} a_{ij} \end{pmatrix}$, where $a_{ij} = 1$ if worker $i$ is assigned to task $j$ and 0 otherwise, for $i,j=1,\dots,N$. This results in the following balanced linear assignment problem:
\begin{alignat}{3}
    \text{maximize} & \sum_{i=1}^N \sum_{j=1}^N p_{ij} a_{ij} & \\
    \text{subject to}
        & \sum_{i=1}^N a_{ij} = 1  & j = 1, \dots, N; \label{eq:cond_task} \\ 
        & \sum_{j=1}^N a_{ij} = 1  & i = 1, \dots, N; \label{eq:cond_worker} \\ 
        & a_{ij} \in \{0,1\}       & i,j = 1, \dots, N;  \label{eq:integer}
\end{alignat}
where conditions \ref{eq:cond_task} and \ref{eq:cond_worker} specify that each task is assigned to exactly one worker and vice versa; condition \ref{eq:integer} imposes absolute assignments by restricting $a_{ij}$ to 0 or 1.

\section{Related work}

The proposed solution in this paper relates to prior work on uncertainty in assignment problems, predict-and-optimize, classification and learning to rank. In this section, we briefly introduce each line of work and its relationship to our contribution.

\subsection{Uncertainty in assignment problems}
Optimal allocation of resources and decision-making under uncertainty are key problems in operations research \citep{ward1957optimal, everett1963generalized}. In this work, we consider an assignment problem. This is a general problem formulation in which the goal is to find an optimal matching of workers and tasks subject to certain constraints. This type of problem has been analyzed extensively \citep{burkard2012assignment} and applied to a diverse range of tasks \citep[e.g.,][]{alonso2017demand, bertsimas2019optimizing}. Moreover, various extensions consider different sources of uncertainty: uncertain worker capacity, uncertain task presence (i.e., outcomes), or uncertain task-worker profits \citep{toktas2006addressing, krokhmal2009random}. This work focuses on a specific type of a linear assignment problem, in which we simultaneously address two sources of uncertainty: uncertain capacity and uncertain task success. However, instead of assuming that task success follows a probability distribution, we use a predictive model to estimate it.

\subsection{Predict-and-optimize}
The intersection of operations research and machine learning has increasingly drawn the attention of researchers from both fields \citep{lodi2017learning, bengio2021machine}. In particular, recent work on predict-and-optimize is relevant \citep{donti2017taskbased, wilder2019melding, elmachtoub2021smart}. The central aim in predict-and-optimize is to align a predictive model more closely with the downstream decision-making context \citep{mandi2020smart}. This is achieved by fusing the prediction and optimization phases and training the model in an end-to-end manner, with the aim of obtaining higher quality decisions \citep{kotary2021end}. Ranking specifically has been studied in this context: Demirovi\'{c} et al. use ranking methods for the knapsack problem \citep{demirovic2019investigation} and study ranking objectives of combinatorial problems in general \citep{demirovic2019predict+}, though both are limited to linear models. In contrast, our proposed approach is compatible with any type of learner that can be trained using gradient-based optimization, but it is applicable only to our specific, though commonly encountered, formulation of the assignment problem.

\subsection{Classification}
Classification is a task in machine learning where the goal is to predict the class of an instance given its characteristics. For instance, classifying a task as either successful or not is a binary classification problem. Existing work typically considers the applications in this paper as classification problems, e.g., fraud detection \citep{vanvlasselaer2017gotcha, cerioli2019newcomb}, credit scoring \citep{baesens2003benchmarking, lessmann2015benchmarking}, direct marketing \citep{baesens2002bayesian} and customer churn prediction \citep{verbeke2011building, verbeke2012new_churn}. Moreover,  to align the models more closely with the decision-making context, cost-sensitive classification has been used \citep{bahnsen2014examplelogistic, petrides2020profit_credit, hoppner2020profit, hoppner2022instance}. Cost-sensitive methodologies incorporate the costs of different decisions into the optimization or use of predictive models \citep{elkan2001foundations, petrides2021csensemble}. Cost-sensitive variants have been proposed for different classification models, such as logistic regression and gradient boosting \citep{bahnsen2014examplelogistic, hoppner2022instance}. The output of a classification model is often used to rank instances, reflected by widely used evaluation metrics that analyze this ranking, such as the receiver operating characteristics curve and precision--recall curve \citep{davis2006relationship}. However, in contrast to our work, these approaches do not consider the available capacity during optimization of the models. Although limited capacity has been acknowledged in the literature (e.g., in fraud detection \citep{dal2017credit}, direct marketing \citep{bose2009quantitative} or churn prediction \citep{hadden2007computer}), no existing solution explicitly addresses this issue.

\subsection{Learning to rank}
In learning to rank, the goal is to predict the order of instances relative to each other, based on their characteristics.
Although learning to rank originated in the field of information retrieval, it is a general framework that has been applied to a variety of problems that have traditionally been solved with classification models, such as software defect prediction \citep{yang2014learning}, credit scoring \citep{coenen2020machine} and uplift modeling \citep{devriendt2020learning}. Moreover, similar to cost-sensitive classification, the learning to rank framework has been extended to incorporate costs of instances to align the optimization of the model more closely with the resulting decisions \citep{mcbride2019cost}.
However, our approach is the first to explicitly consider the available capacity during the optimization of the ranking model.
{\parfillskip=0pt \emergencystretch=.5\textwidth \par}

\section{Methodology}

We present two approaches for the problem presented in \cref{sec:problem formulation}. On the one hand, a two-stage predict-then-optimize framework can be used. In the first stage, we predict the task successes $\mathbf{\hat{Y}}$. Here, we show how different types of classification objectives can be used to predict task success. In the second stage, we optimize the assignment of tasks to workers to obtain an assignment matrix $A$. For this, we provide an analytical solution and prove its optimality. On the other hand, we present an integrated predict-and-optimize framework for prediction and optimization by leveraging learning to rank techniques.

\subsection{Two-stage predict-then-optimize}

This section presents a conventional two-stage approach for solving the problem. In the first stage, a classification model predicts each task's probability of success. Existing approaches in classification \citep{murphy2012machine, hoppner2022instance} can be used to optimize this model for either accuracy or profit. In the second stage, tasks are assigned to workers based on these predicted probabilities. We present a straightforward procedure for this assignment and prove its optimality.

\subsubsection{Predicting task outcomes using classification.}

To handle the uncertainty regarding task outcomes, we train a classification model to predict whether a task will be successful. Given historical data $\mathcal{D}_\text{Train}$, the goal is to predict $y_i$ using a classifier $f_\theta: \mathcal{X} \to [0, 1]: \mathbf{x} \mapsto f_\theta(\mathbf{x})$ defined by parameters $\theta \in \Theta$ that predicts the probability of a task being successful. Classifier training can be accomplished with different objective functions. We present two alternatives: one that focuses optimization on accuracy and one that optimizes the classification cost.

The conventional approach is to train the classifier with the aim of maximizing accuracy. This can be achieved using the maximum likelihood approach or, equivalently, by minimizing the cross-entropy loss function \citep{murphy2012machine}:
\begin{equation}
    \mathcal{L}^\text{CE} = 
        y_i \text{log } f_\theta(\mathbf{x}_i) + (1-y_i)\text{log}\big(1-f_\theta(\mathbf{x}_i)\big).
\label{eq:ce}
\end{equation}

A drawback of this approach is that the solution ignores some of the problem specifications. Some tasks are more important to classify correctly than others, depending on their cost (or profit) when executed. Therefore, in cost-sensitive learning, these costs are incorporated into the training of a model. In classification, the cost of a decision depends on whether it was classified correctly and on the task itself. These costs are formalized with the concept of a cost matrix $\mathbf{c}_i$ \citep{elkan2001foundations}:
\begin{equation}
    \begin{matrix}
     & \text{\bf\small Actual class } $$y_i$$ \\[0.1em]
     & \begin{matrix}
     \hspace{2pt} 0 \hspace{10pt} & \hspace{10pt} 1
     \end{matrix} \\[0.2em]
     
    \begin{matrix}
      \multirow{2}{*}{\text{\bf\small Predicted class } $\hat{y}_i$} & 0 \\[0.3em]
       & 1 \\[0.3em]
    \end{matrix}
     &
    \begin{pmatrix}
    c^\text{TN}_i & c^\text{FN}_i \\[0.3em]
    c^\text{FP}_i & c^\text{TP}_i \\[0.3em]
    \end{pmatrix} \\
    \end{matrix}
\label{eq:cost_matrix}
\end{equation}
This way, we can directly minimize the average expected cost of predictions, as an alternative to the cross-entropy loss \citep{bahnsen2014examplelogistic, hoppner2022instance}:
\begin{align}
\begin{split}
    \mathcal{L}^\text{AEC} &= 
         y_i \Big( f_\theta(\mathbf{x}_i) c_i^\text{TP} + \big(1 - f_\theta(\mathbf{x}_i)\big) c_i^\text{FN} \Big)
         \\
         &+ (1-y_i) \Big( f_\theta(\mathbf{x}_i) c_i^\text{FP} + \big(1 - f_\theta(\mathbf{x}_i)\big) c_i^\text{TN} \Big).
\label{eq:aec}
\end{split}
\end{align}
$\mathcal{L}^\text{AEC}$ is a semidirect predict-and-optimize method: it incorporates some information of the downstream decision-making task, but learning is still separated from optimization \citep{demirovic2019predict+, demirovic2019investigation}.

\subsubsection{Optimizing worker--task assignments.}

Given task predictions $\mathbf{\hat{Y}}$, we can optimize the task--worker assignments. Although various general algorithms have been proposed to solve assignment problems, our formulation can be solved analytically. Here, we present this solution and prove its optimality.

\begin{theorem}
$\mathbf{W} = \{w_i\}_{i=1}^N$ is a sequence of monotonically decreasing worker success rates such that $w_1 \geq \dots \geq w_N$ with $w_i \in [0,1]$ for $i = 1, \dots, N$. $\mathbf{\hat{R}} = \begin{pmatrix} \hat{r}_1 & \dots & \hat{r}_N \end{pmatrix}$ are the predicted task rewards arranged in decreasing order such that $\hat{r}_1 \geq \dotso \geq \hat{r}_N$. For the resulting expected profit matrix $P = \begin{pmatrix} p_{ij} \end{pmatrix}$ with $p_{ij} = w_i \hat{r}_j$, the optimal assignment is $A^* = I_N$. 
\end{theorem}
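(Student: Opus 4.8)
The plan is to first exploit the combinatorial structure forced by the assignment constraints. Conditions \ref{eq:cond_task}, \ref{eq:cond_worker}, and \ref{eq:integer} say precisely that $A$ is a permutation matrix: each row and each column contains exactly one $1$. Hence every feasible assignment corresponds to a unique permutation $\sigma$ of $\{1, \dots, N\}$ via $a_{ij} = 1 \iff j = \sigma(i)$, and the objective $\sum_{i,j} p_{ij} a_{ij} = \sum_{i,j} w_i \hat{r}_j a_{ij}$ collapses to $\sum_{i=1}^N w_i \hat{r}_{\sigma(i)}$. The theorem then reduces to showing that this sum of paired products is maximized by the identity permutation $\sigma = \mathrm{id}$, which is exactly the assignment $A = I_N$.

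This is the content of the rearrangement inequality: for two sequences sorted in the same (here, decreasing) order, the sum of paired products is largest when the pairing respects that common order. I would either invoke this inequality directly or, to keep the argument self-contained, prove the relevant direction by an exchange argument. Suppose $\sigma$ is optimal but not the identity; then it contains an inversion, i.e.\ a pair $i < k$ with $\sigma(i) > \sigma(k)$. Swapping the images of $i$ and $k$ to form $\sigma'$ changes the objective by
\[
w_i \hat{r}_{\sigma(k)} + w_k \hat{r}_{\sigma(i)} - w_i \hat{r}_{\sigma(i)} - w_k \hat{r}_{\sigma(k)} = (w_i - w_k)\big(\hat{r}_{\sigma(k)} - \hat{r}_{\sigma(i)}\big).
\]
Since $i < k$ gives $w_i \geq w_k$, and $\sigma(i) > \sigma(k)$ together with the decreasing order of the rewards gives $\hat{r}_{\sigma(k)} \geq \hat{r}_{\sigma(i)}$, both factors are nonnegative, so the swap never decreases the objective. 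Repeatedly resolving inversions in this way (e.g.\ adjacent ones) is a finite process, because the inversion count strictly decreases at each step, and it transforms any $\sigma$ into the identity without lowering the objective. This shows $I_N$ attains the maximum.

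The individual steps are routine, so I do not expect a serious obstacle; the rearrangement structure does essentially all the work. The one subtlety worth flagging concerns uniqueness. The statement names $A^* = I_N$ as \emph{the} optimum, but when the $w_i$ or the $\hat{r}_j$ contain ties the maximizer is not unique, since a swap between equal values leaves the objective unchanged. I would therefore state the conclusion as ``$I_N$ is an optimal assignment'' — it always achieves the maximal expected profit — and note that it is the unique optimum exactly when both sequences are strictly decreasing, in which case every inversion swap strictly increases the objective. No assumption on the sign of the $\hat{r}_j$ is needed, so the argument applies verbatim even when some predicted rewards are negative.
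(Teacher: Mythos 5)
Your proposal is correct, and it shares the paper's key inequality --- the pairwise exchange bound $w_i \hat{r}_i + w_j \hat{r}_j \geq w_i \hat{r}_j + w_j \hat{r}_i$, which both arguments reduce to the product of differences $(w_i - w_j)(\hat{r}_i - \hat{r}_j) \geq 0$ --- but the overall structure is genuinely different. The paper builds the solution greedily: it selects the single most profitable pair $p_{11} = w_1 \hat{r}_1$, repeats this on the remaining workers and tasks, and at each step checks that switching two task assignments (with a case distinction on whether the second task has already been assigned) cannot increase the profit. Your route instead reduces the feasible set to permutations, invokes the rearrangement inequality, and makes it self-contained by resolving inversions: any permutation is connected to the identity by swaps, each of which is shown to be non-decreasing in the objective, with termination guaranteed by the strictly decreasing inversion count. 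Your version is arguably the more complete argument for \emph{global} optimality: checking that no single pairwise deviation from the greedy solution helps (as the paper does) establishes only a local-optimality property unless one also argues, as you do, that every feasible assignment can be reached from the identity through such monotone swaps --- this is exactly the step your inversion-counting argument supplies and the paper leaves implicit. Your remark on uniqueness is also well taken: with ties in $\mathbf{W}$ or $\mathbf{\hat{R}}$ the maximizer is not unique, so the conclusion is properly stated as ``$I_N$ is \emph{an} optimal assignment,'' a nuance the theorem statement glosses over. Finally, your observation that no sign assumption on the $\hat{r}_j$ is needed is accurate and worth having on record, since predicted rewards can indeed be negative in the paper's applications.
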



\begin{proof}

$A^* = I_N$ is a feasible solution: it is straightforward to verify that the identity matrix satisfies constraints \ref{eq:cond_task}, \ref{eq:cond_worker} and \ref{eq:integer} of the assignment problem. 
Moreover, the solution is the result of a greedy strategy: at each step $m$, we assign worker $w$ with probability $w_m$ to the highest remaining task $m$ with payoff $\hat{r}_m$. To prove the optimality of this strategy, we show that it does not deviate from the optimal solution at each step up until the final solution is obtained.

First, the best single worker--task assignment is selected: the highest profit $p_{ij}$ is $p_{11} = w_1 \hat{r}_1$; no other higher profit exists as no higher $w_i$ or $\hat{r}_j$ exist. Next, we continue this strategy of selecting the best remaining worker--task assignment until there are no tasks left. We can show that, at each step, no other assignment matrix leads to a larger profit than this one. At step $m$, the profit obtained given assignment matrix $A^*$ equals $p_{11} + p_{22} + \dotso + p_{mm} = w_1 \hat{r}_1 + w_2 \hat{r}_2 + \dotso + w_m \hat{r}_m$.

Deviating from $A^*$ at a certain step means that at least one worker must be assigned to another task. We prove that no alternative assignment leads to a higher profit. Consider switching the assignments of tasks $i$ and $j$ with $i < j$. In the case that task $j$ has already been assigned to a worker, we have:
{\parfillskip=0pt \emergencystretch=.5\textwidth \par}
\begin{alignat*}{4}
     &&p_{ii} + p_{jj} &\geq p_{ij} + p_{ji} &\\ \nonumber
\iff \quad &&w_i \hat{r}_i + w_j \hat{r}_j &\geq w_i \hat{r}_j + w_j \hat{r}_i &\\ \nonumber
\iff \quad &&w_i (\hat{r}_i - \hat{r}_j) &\geq w_j (\hat{r}_i - \hat{r}_j) &\\ \nonumber
\iff \quad &&w_i \geq w_j &\text{ and } \hat{r}_i - \hat{r}_j \geq 0. \nonumber
\end{alignat*}
In the case that task $j$ has not yet been assigned, we have:
\begin{alignat*}{4}
     &&p_{ii} &\geq p_{ij} &\\ \nonumber
\iff \quad &&w_i \hat{r}_i &\geq w_i \hat{r}_j &\\ \nonumber
\iff \quad &&w_i \geq 0 & \text{ and } \hat{r}_i \geq \hat{r}_j \nonumber
\end{alignat*}
In both cases, the final statements follow from $\mathbf{W}$ and $\mathbf{\hat{R}}$ being monotonically decreasing and $i<j$, or from $w_i \in \left[0, 1\right]$.
\end{proof}

\subsection{Integrated predict-and-optimize using learning to rank}

In this section, we present a novel integrated approach for solving the assignment problem in \cref{sec:problem formulation}. Previously, we showed how the optimal assignment is $A^* = I_N$ if $\mathbf{W}$ and $\mathbf{\hat{R}}$ are arranged in decreasing order. Given that $\mathbf{W}$ is defined as a decreasing sequence, the challenge of optimizing the assignment can also be seen as correctly predicting the order of expected task rewards $\mathbf{\hat{R}}$. This formulation is equivalent to an alternative interpretation of the assignment problem as finding the optimal assignments by permuting the rows and columns of the profit matrix $P$ such that the resulting sum of the elements on the diagonal is maximized, or formally \citep{krokhmal2009random}: 
\begin{equation}
    \min_{\pi \in \Pi_n} \sum_{i=1}^N p_{i, \pi(i)}
\end{equation}
for $\pi \in \Pi_N$ with $\Pi_N$ the set of all permutations of the indices $\{1, \dots , N\}$, i.e., $\pi : \{1, \dots, n\} \mapsto \{1, \dots, N\}$. In our case, we need to find the optimal permutation of available tasks $\pi(\mathbf{T})$.
{\parfillskip=0pt \emergencystretch=.5\textwidth \par}

In this formulation, the assignment problem can be seen as predicting the optimal permutation $\pi(\mathbf{T})$ based on characteristics of the available tasks. Formally, let $g_\theta: \mathcal{X} \to \mathbb{R}: \mathbf{x} \mapsto g_\theta(\mathbf{x})$ be a ranking model. The goal is to find parameters $\theta \in \Theta$ such that the ordering of the mapping of tasks $g_\theta(x_1) \geq \dotso \geq g_\theta(x_n)$ corresponds to the ordering of their rewards $r_1 \geq \dotso \geq r_N$. A ranking based on $g_\theta$ can be seen as a permutation $\pi$ of the indices $\{1, \dots , n\}$.

The expected profit of a permutation $\pi(\mathbf{T})$ given a capacity $W$ can be optimized directly using learning to rank. The key insight is that for a given permutation $\pi$ of tasks $\mathbf{T}$, the expected profit $\sum_{i=1}^N w_i \hat{r}_{\pi(i)}$ of a ranking is equivalent to its discounted cumulative gain (DCG), which is a commonly used class of metrics in learning to rank \citep{wang2013theoretical}. Typically, the DCG is defined with discount $\frac{1}{\text{log}_2(i+1)}$ and gain $2^{t_i} - 1$ for $i \in \{1, \dots, n\}$. However, to match the expected profit, our formulation uses discount $\{w_i\}^N_{i=1}$ corresponding to the capacity distribution, gain equal to $1$ for all $i$, and relevance $\hat{r}_i$. By dividing the DCG by its ideal value (IDCG), the normalized DCG (NDCG) is obtained: NDCG = $\frac{\text{DCG}}{\text{IDCG}}$ with NDCG $\in [0,1]$ \citep{murphy2012machine}.

Optimizing the NDCG (or equivalently, the expected profit) directly is challenging as it depends on the predicted relative positions of instances instead of the model's outputs $g_\theta(\mathbf{x_i})$. Nevertheless, various algorithms have been proposed for this task in the literature on learning to rank (e.g., \citep{valizadegan2009learning}). In this work, we use LambdaMART \citep{wu2008ranking, burges2010ranknet}, which uses a combination of the LambdaRank loss \citep{burges2006learning} and gradient boosting of decision trees \citep{friedman2001greedy} to construct the ranking model. LambdaMART is a widely used approach that achieved the best performance in the Yahoo! Learning To Rank Challenge \citep{burges2010ranknet, chapelle2011yahoo, li2014learning}. In this way, we can train a ranking model $g_\theta$ to optimize the NDCG or expected profit of the assignments directly.

Finally, we need to specify each task's relevance, which serves as a label according to which the ranking would ideally be constructed. Because the ranking corresponds to the priority that should be given to tasks, it should respect the ordering in terms of both outcome $y_i$ and task payoffs $\mathbf{v_i}$. In other words, successful tasks should be more relevant than unsuccessful tasks, and a more profitable task should be more relevant. Therefore, we use a task's reward $r_i$ as a cost-sensitive relevance, as it uses an instance's class label $y_i$ and its cost matrix $\mathbf{c}_i$ (see \cref{eq:cost_matrix}). By means of this approach, a positive task's relevance is the profit (or equivalently, the negative cost) obtained by classifying it positively minus the profit obtained by classifying it negatively; vice versa for negative tasks. Thus, we obtain the relevance or reward $r_i$ as follows:
\begin{equation*}
    r_i = y_i v^+_i + (1-y_i) v^-_i = y_i \left(c^\text{FN}_i -c^\text{TP}_i\right) + (1- y_i) \left(c^\text{TN}_i - c^\text{FP}_i\right)
    .
\end{equation*}
Alternatively, if the goal is to optimize for accuracy rather than cost, we can use class label $y_i$ as the relevance of instance $i$.

\section{Empirical results}

In this section, we empirically evaluate and compare the two-stage and the integrated approach for a variety of tasks. We use publicly available data from a variety of application areas. For each application, the goal is to optimally allocate resources  to optimize the expected cost given stochastic capacity.
All code for the experimental analysis will be made available online upon publication of this paper.

To compare the different approaches, we use gradient boosting to train the predictive models. Four different objectives are compared, depending on the task (classification or learning to rank) and on whether they aim to maximize precision or profit. xgboost denotes a conventional classification model using the cross-entropy loss $\mathcal{L}^\text{CE}$ (see \cref{eq:ce}), while csboost uses a cost-sensitive objective function $\mathcal{L}^\text{AEC}$ (see \cref{eq:aec}). LambdaMART uses the binary class label $y_i$, whereas csLambdaMART uses task payoffs $r_i$ as relevance. All models are implemented in Python using the \href{https://xgboost.readthedocs.io/en/latest/}{\texttt{xgboost}} package \citep{chen2015xgboost}. Gradient boosting is a popular methodology for both classification and ranking that has great predictive performance, as illustrated by recent benchmarking studies \citep{lessmann2015benchmarking, gunnarsson2021deep}.

\subsection{Data}

The data sets are enlisted in \cref{tab:data_overview} and stem from different application areas: customer churn prediction, credit scoring and direct marketing. They all concern binary classification where tasks are either successful or unsuccessful.
Resources are limited and stochastic: we assume a lognormal capacity distribution $W \sim \mathcal{LN}(\mu = \text{log}(100), \sigma = 1)$. 

The cost matrices are taken from earlier work on cost-sensitive classification (see \cref{tab:cost_matrices}).
In churn prediction, we have $c^\text{FP}_i$ and $c^\text{FN}_i$ as, respectively, 2 and 12 times the monthly amount $A_i$ for KTCC following \citep{petrides2021csensemble}; whereas we follow the cost matrix given with the data set for TSC \citep{bahnsen2015novel}.
For credit scoring, we calculate the instance-dependent costs $c^\text{FP}_i$ and $c^\text{FN}_i$ as a function of the loan amount $A_i$ following \citep{bahnsen2014examplelogistic}.
In direct marketing, a positive classification incurs a fixed cost $c_f = 1$, while missing a potential success incurs an instance-dependent cost equal to the expected interest given $A_i$, following \citep{bahnsen2015exampletree}.
Similarly, in fraud detection, a positive prediction leads to an investigation that entails a fixed cost $c_f$, and missing a fraudulent transaction leads to a cost equal to its amount $A_i$. We use $c_f = 10$, following \citep{hoppner2022instance}.

\bgroup
\begin{table}[t]
\setlength{\tabcolsep}{10pt}
    \centering
    \begin{tabular}{L{90pt}HL{40pt}R{45pt}HR{45pt}L{150pt}}
    \toprule
    \bf Application & \bf Name & \bf Abbr. & \bf $N$ & $\mathbb{E}(R)$ [\%] & \% Pos & \bf Reference \\ \midrule
    \multirow{2}{*}{Churn prediction} & Kaggle Telco Customer Churn & {KTCC} & 7,032 & 2.34 & 26.58 & \citep{kaggle2017telco} \\
     & TV Subscription Churn & {TSC} & 9,379 & 1.76 & 4.79 & \citep{bahnsen2015novel} \\ \midrule
    \multirow{7}{*}{Credit scoring} & Home Equity & {HMEQ} & 1,986 &  & 19.95 & \citep{baesens2016credit} \\
     & BeNe1 Credit Scoring & {BN1} & 3,123 &  & 33.33 & \citep{lessmann2015benchmarking} \\
     & BeNe2 Credit Scoring & {BN2} & 7,190 &  & 30.00 & \citep{lessmann2015benchmarking} \\
     & VUB Credit Scoring & {VCS} & 18,917 & 0.87 & 16.95 & \citep{petrides2020cost} \\
     & UK Credit Scoring & {UK} & 30,000 &  & 4.00 & \citep{lessmann2015benchmarking} \\
     & UCI Default of Credit Card Clients & {DCCC} & 30,000 & 0.55 & 22.12 & \citep{yeh2009comparisons} \\
     & Give Me Some Credit & {GMSC} & 112,915 &  & 6.74 & / \\ \midrule
    \multirow{2}{*}{Direct marketing} & UCI Bank Marketing & {UBM} & 45,211 & 0.36 & 11.70 & \citep{moro2014data} \\
     & KDD Cup 1998 & {KDD} & 191,779 & & 5.07 & / \\ \midrule
    \multirow{3}{*}{Fraud detection} & Kaggle Credit Card Fraud & {KCCF} & 282,982 &  & 0.16 & \citep{dal2015calibrating} \\
     & Kaggle IEEE Fraud Detection & {KIFD} & 590,540 &  & 3.50 & / \\
     & APATE Credit Card Fraud & {ACCF} & 3,639,323 &  & 0.65 & \citep{vanvlasselaer2015apate} \\
    \bottomrule \\
    \end{tabular}
    \caption{\textbf{Data sets overview.} \normalfont{For each data set, we present the application area, abbreviation, number of instances ($N$), class imbalance in terms of proportion of positive instances (\% Pos), and corresponding reference. 
    }}
    \label{tab:data_overview}
\end{table}
\egroup

\subsection{Results}

We present the results using various performance metrics to compare the different models. The main metric of interest is either the expected precision or the expected profit given the stochastic capacity distribution $W$, depending on whether accuracy or profit is the objective. Furthermore, we present several additional classification and ranking metrics to gain more insight into the differences between the methodologies. For each metric, we present the average over all data sets and test whether the best performance is significantly different from the others using a Friedman test on the rankings with Bonferroni--Dunn post hoc correction \citep{demvsar2006statistical, garcia2008extension, garcia2010advanced} (see \cref{tab:metrics_overview}). 

\bgroup
\renewcommand{\arraystretch}{1.2}
\begin{table}[t]
\centering
\tabcolsep=0.15cm
\begin{subtable}{0.24\textwidth}
    \centering
    \begin{tabular}{C{9pt}R{9pt}|C{27pt}C{27pt}}
    \toprule
                                                    &        & \multicolumn{2}{c}{$y_i$}                   \\
                                                    &        & 0                & 1                      \\
    \cmidrule{1-4}
    \multicolumn{1}{c}{\multirow{2}{*}{$\hat{y}_i$}}  & 0      & 0                & 12$A_i$                \\
    \textbf{}                                       & 1      & 2$A_i$           & 0                      \\
    \bottomrule
    \end{tabular}
    \subcaption{\footnotesize Churn prediction}
\label{tab:cost_matrix_churn_prediction}
\end{subtable}%
\begin{subtable}{0.24\textwidth}
    \centering
    \begin{tabular}{C{9pt}R{9pt}|C{27pt}C{27pt}}
    \toprule
                                                    &        & \multicolumn{2}{c}{$y_i$}                     \\
                                                    &        & 0                & 1                        \\
    \cmidrule{1-4}
    \multicolumn{1}{c}{\multirow{2}{*}{$\hat{y}_i$}}  & 0      & 0                & $c^\text{FN}_i$               \\
    \textbf{}                                       & 1      & $c^\text{FP}_i$       & 0                        \\
    \bottomrule
    \end{tabular}
    \subcaption{\footnotesize Credit scoring}
\label{tab:cost_matrix_credit_scoring}
\end{subtable}%
\begin{subtable}{0.24\textwidth}
    \centering
    \begin{tabular}{C{9pt}R{9pt}|C{27pt}C{27pt}}
    \toprule
                                                    &        & \multicolumn{2}{c}{$y_i$}                   \\
                                                    &        & 0                & 1                      \\
    \cmidrule{1-4}
    \multicolumn{1}{c}{\multirow{2}{*}{$\hat{y}_i$}}  & 0      & 0                & $A_i$/$Int_i$          \\
    \textbf{}                                       & 1      & $c_f$            & $c_f$                  \\
    \bottomrule
    \end{tabular}
    \subcaption{\footnotesize Direct marketing}
\label{tab:cost_matrix_direct_marketing}
\end{subtable}%
\begin{subtable}{0.24\textwidth}
    \centering
    \begin{tabular}{C{9pt}R{9pt}|C{27pt}C{27pt}}
    \toprule
                                                    &        & \multicolumn{2}{c}{$y_i$}                   \\
                                                    &        & 0                & 1                      \\
    \cmidrule{1-4}
    \multicolumn{1}{c}{\multirow{2}{*}{$\hat{y}_i$}}  & 0      & 0                & $A_i$          \\
    \textbf{}                                       & 1      & $c_f$            & $c_f$                  \\
    \bottomrule
    \end{tabular}
    \subcaption{\footnotesize Fraud detection}
\label{tab:cost_matrix_fraud_detection}
\end{subtable}%
\caption{
\textbf{Cost matrices for the different application areas.} \normalfont{For each application, we present the costs associated with the outcomes in terms of predicted ($\hat{y}$) and actual ($y$) labels. $A_i$, $c^\text{FN}_i$, $c^\text{FP}_i$ and $Int_i$ represent instance-dependent costs and $c_f$ is a fixed cost.}}
\label{tab:cost_matrices}
\end{table}
\egroup

\subsubsection{Expected precision and expected profit.} In terms of expected precision, LambdaMART is the best performing model. Two models optimize for accuracy: LambdaMART and xgboost. The ranking model, LambdaMART, outperforms the classification model, xgboost. In terms of expected profit, the cost-sensitive ranking model, csLambdaMart, performs best. Of the two models optimizing for accuracy, xgboost and LambdaMART, the ranking model again achieves better results, although this difference is not statistically significant. We compare the trade-off between profit and precision in \cref{fig:profit_vs_precision} by plotting the rankings for each data set. To get an idea of the densities for the different models, we estimate it using a Gaussian kernel and show it for probabilities greater than $0.5$. Although the densities overlap, the ranking models outperform their classifying counterparts in their respective category.

\begin{table*}[]
    \centering
    \resizebox{\textwidth}{!}{
    \begin{tabular}{L{70pt}|C{70pt}C{70pt}|C{70pt}C{70pt}C{70pt}}
    \toprule
    \rowcolor{gray!5}
     & & & & & \\
    \rowcolor{gray!5}
     & \multirow{-2}{*}{\textbf{\shortstack{Expected\\ precision}}} & \multirow{-2}{*}{\textbf{\shortstack{Expected\\ profit}}} & \multirow{-2}{*}{\textbf{\shortstack{Average\\ precision}}} & \multirow{-2}{*}{\textbf{\shortstack{Spearman\\ correlation}}} & \multirow{-2}{*}{\textbf{AUCPC}} \\\midrule
    
    \cellcolor{red!15} xgboost & 0.4956 {\scriptsize $\pm$ 0.28} & 0.2115 {\scriptsize $\pm$ 0.18} & \textbf{0.9423} {\scriptsize $\pm$ \textbf{0.05}} & $-$0.0382 {\scriptsize $\pm$ 0.11} & 0.5548 {\scriptsize $\pm$ 0.25}	\\
    \cellcolor{green!15} csboost & 0.5865 {\scriptsize $\pm$ 0.24} & \underline{0.2940 {\scriptsize $\pm$ 0.19}} & 0.9075 {\scriptsize $\pm$ 0.07} & \textcolor{white}{$+$}0.2258 {\scriptsize $\pm$ 0.27} & 0.5657 {\scriptsize $\pm$ 0.24}	\\
    \midrule
    \cellcolor{blue!15} LambdaMART & \textbf{0.6555} {\scriptsize $\pm$ \textbf{0.26}} & 0.2471 {\scriptsize $\pm$ 0.16} & \underline{0.9366 {\scriptsize $\pm$ 0.05}} & $-$0.0302 {\scriptsize $\pm$ 0.15} & 0.5363 {\scriptsize $\pm$ 0.22}	\\
    \cellcolor{cyan!15} csLambdaMART & 0.6089 {\scriptsize $\pm$ 0.25} & \textbf{0.3587} {\scriptsize $\pm$ \textbf{0.17}} & 0.9336 {\scriptsize $\pm$ 0.05} & \textcolor{white}{$+$}\textbf{0.3829} {\scriptsize $\pm$ \textbf{0.28}} & \textbf{0.5999} {\scriptsize $\pm$ \textbf{0.23}}	\\
    
    \bottomrule
    \end{tabular}
    }
    \caption{\textbf{Evaluation metrics overview.} {We present an overview of the evaluation metrics, showing the average and standard deviation over all data sets. The best result is denoted in \textbf{bold}. Results that are not significantly different from the best result are \underline{underlined} ($\alpha = 0.05$). This is based on a Friedman test on the rankings with Bonferroni--Dunn post hoc correction.
    For both expected precision and profit, the ranking models perform best in their respective category. For the classification metric, average precision, the cost-insensitive classification model, xgboost, performs best. Conversely, for the ranking metrics, namely, Spearman correlation and the area under the cumulative profit curve, the ranking models outperform their classifying counterparts.}}
    \label{tab:metrics_overview}
    \vspace{-10pt}
\end{table*}

\subsubsection{Average precision, Spearman's $\rho$ and AUCPC.} These metrics weight all instances in the ranking equally as opposed to the previous metrics that weighted instances depending on their probability of being processed given the capacity distribution. On the one hand, we consider a classification metric: given the high degree of class imbalance for some data sets, we use the average precision \citep{davis2006relationship}. On the other hand, we consider two ranking metrics: the area under the cumulative profit curve and Spearman's rank correlation coefficient $\rho$.

First, we assess the quality of the model's predictions with a standard classification metric: average precision (AP). This metric summarizes the precision-recall curve and looks at the trade-off between precision and recall at different thresholds. The cost-insensitive classification model, xgboost, performs best. This result is expected as it is a classification model that optimizes for accuracy. However, this conventional classification metric has only weak correlation with the expected precision, suggesting that it is not a good indicator of performance.

We also adopt two ranking metrics. First, we use Spearman's rank correlation coefficient to quantify the correlation between the ranking of the predictions and the ranking of the task payoffs. csLambdaMart is the best performing model, outperforming csboost. Moreover, both cost-insensitive models have a correlation of approximately 0. This is as expected, as these models do not take payoff into account in their optimization. Second, the cumulative profit curve plots the profit that is realized as a function of the number $k$ of first ranked instances, with $k \in [1,N]$. We compare the area under this curve with the area of a random ranking and one of the optimal ranking to obtain a value between 0 and 1. csLambdaMART performs best, though neither the difference with xgboost nor csboost is statistically significant.

These results indicate that metrics for evaluating the ranking, such as Spearman's $\rho$ or the AUCPC, are more suitable than classification metrics, such as the average precision, for evaluating a model's performance under limited capacity. These findings suggest that ranking as a solution more closely aligns with the problem of allocating limited resources to uncertain tasks than classification, which is also confirmed by the superior performance of ranking models compared to classification models in terms of expected precision and expected profit.

\begin{figure}[t]
    \centering
    \includegraphics[width=0.6\textwidth]{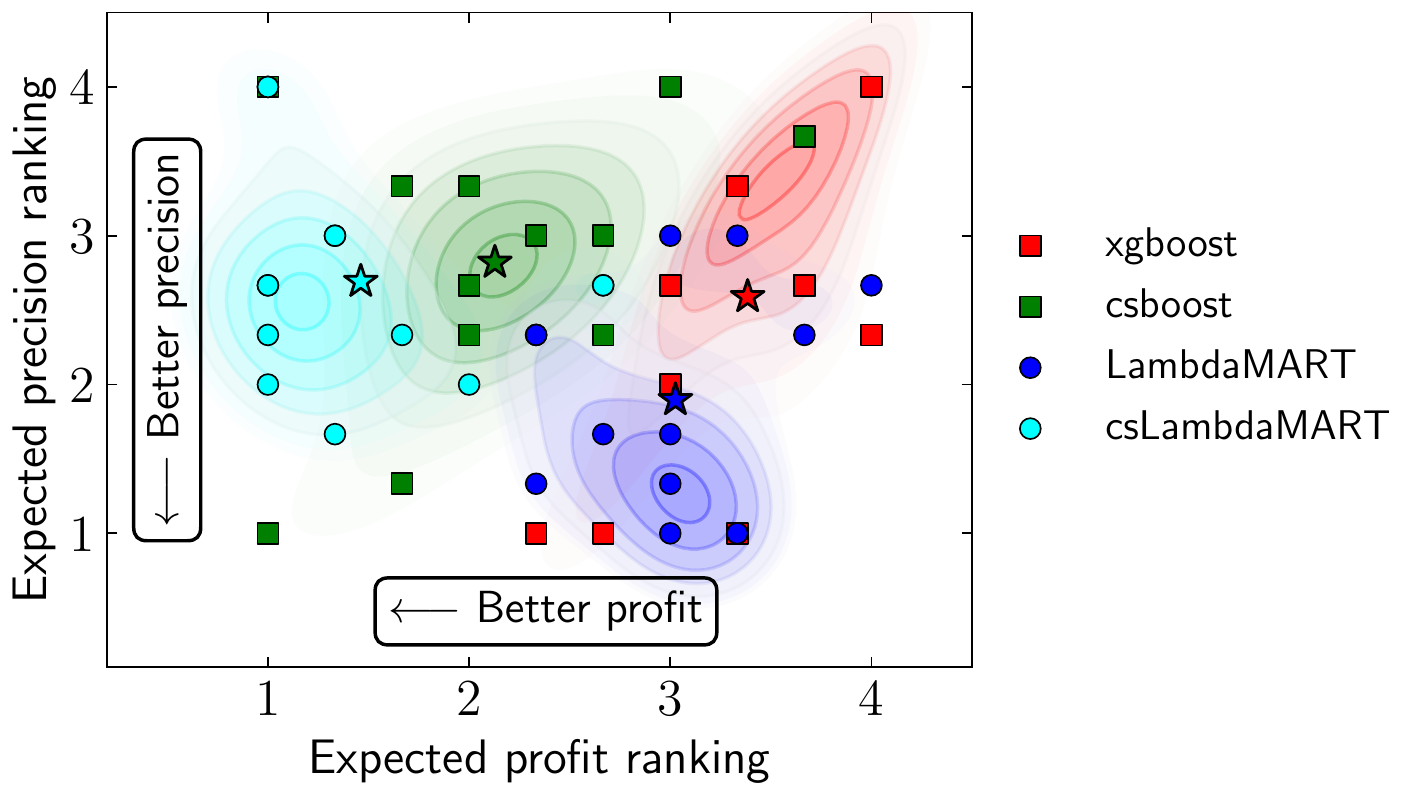}
    \caption{\textbf{Comparing the methodologies in terms of expected precision and profit.} \normalfont{We plot the rankings in terms of expected profit and expected precision for each method on each data set. Each method's average ranking is shown with a star ($\medwhitestar$). Moreover, the ranking density is fitted with a Gaussian kernel; for visual clarity, only probabilities greater than $0.5$ are shown. On average, csLambdaMART performs best in terms of expected profit, while LambdaMART performs best in terms of expected precision.
    }}
    \label{fig:profit_vs_precision}
\end{figure}

\subsubsection{Top $k$ metrics.}
Finally, we also consider metrics focusing solely on the top of the ranking. Given limited capacity, these are the instances that will be prioritized. We can evaluate this critical part of the ranking by looking at the precision and profit of the ranking for the first $k$ instances for different values of $k$ (see \cref{fig:top_k_precision_profit}). The ranking model optimizing for accuracy, LambdaMART, performs best in terms of precicision@$k$, while the ranking model optimizing for profit, csLambdaMART, has the best performance in terms of profit@$k$. 

\begin{figure}[t]
\centering
\begin{subfigure}{.3\textwidth}
  \centering
  \includegraphics[width=\textwidth]{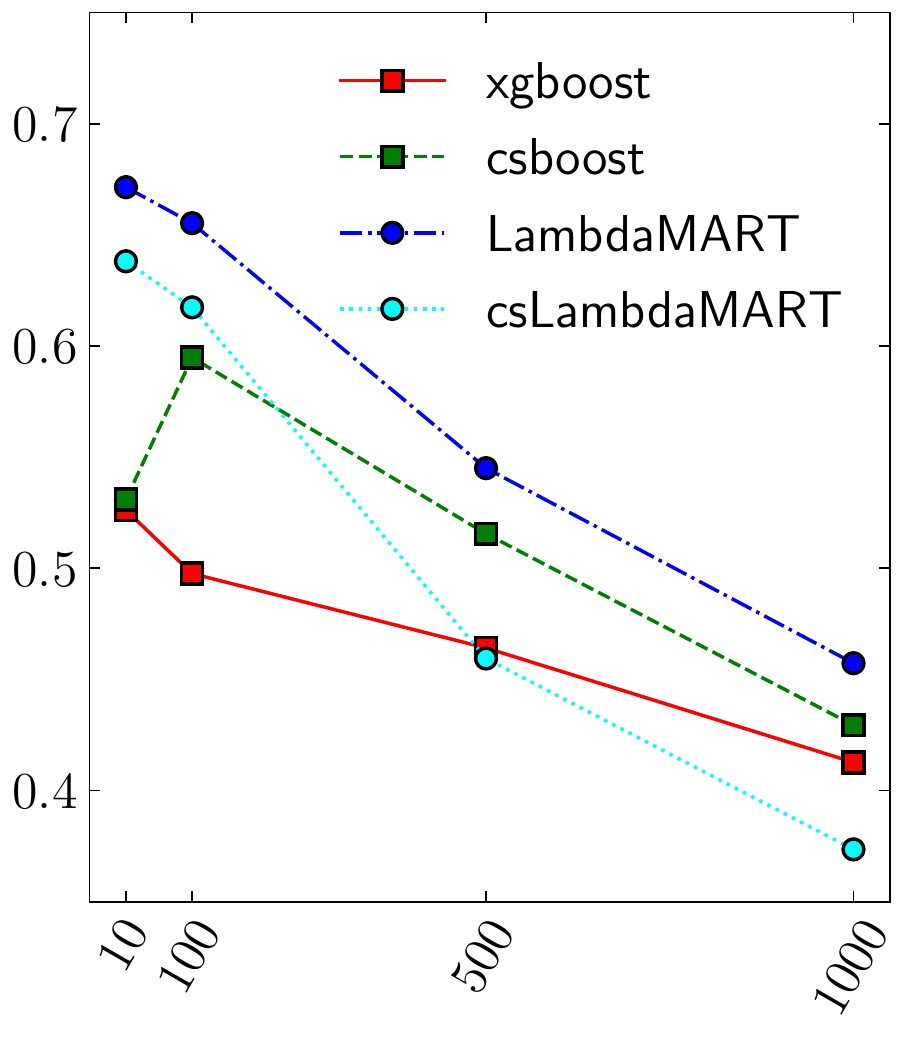}
  \caption{\footnotesize \textbf{Precision@$k$}}
  \label{fig:top_precision}
\end{subfigure}%
\begin{subfigure}{.3\textwidth}
  \centering
  \includegraphics[width=\textwidth]{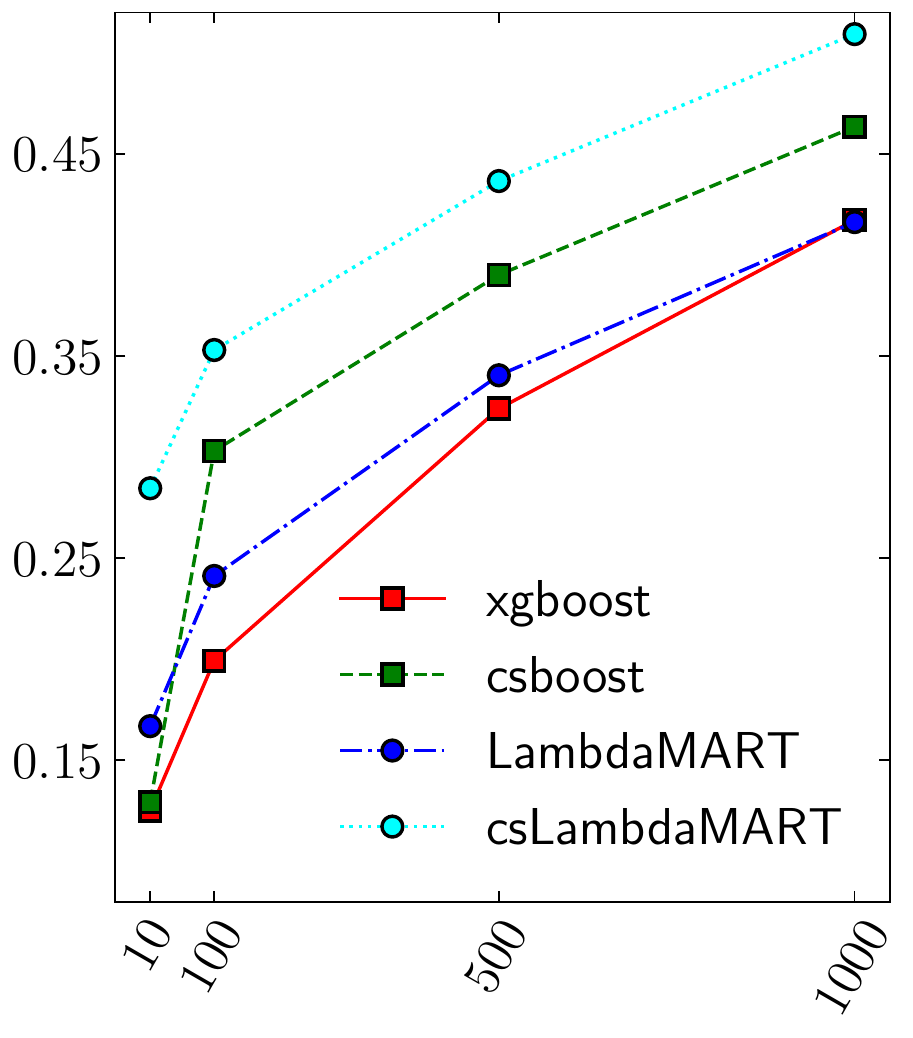}
  \caption{\footnotesize \textbf{Profit@$k$}}
  \label{fig:top_profit}
\end{subfigure}
\vspace{5pt}
\caption{\textbf{Evaluating the top $k$ ranked instances.} \normalfont{Precision \textbf{(a)} and profit \textbf{(b)} for the top $k$ instances in the ranking for the different models averaged over all data sets. The ranking models outperform the classifiers in the metric they optimize for: LambdaMART is the best in terms of precision; csLambdaMART has the best profit.}}
\label{fig:top_k_precision_profit}
\end{figure}

\section{Conclusion}

In this work, we formally introduced and defined a commonly encountered problem: how to optimally allocate limited, stochastic resource capacity to tasks with uncertain payoff to maximize the expected profit. Moreover, we contribute by proposing a novel integrated solution using learning to rank and empirically comparing it with a more conventional predict-then-optimize approach using a classification model.

Our findings illustrate the benefit of approaching this problem as a ranking problem, which allows us to consider the availability of limited and stochastic resources. Theoretically, we show how the expected profit for a given capacity distribution can be optimized directly using learning to rank with a specific formulation of the net discounted cumulative gain as the objective. Empirical results for a variety of applications show that ranking models achieve better performance in terms of expected profit or expected precision, depending on the objective. Moreover, good results in terms of ranking metrics are more indicative of good performance in terms of expected profit compared to conventional classification metrics. This illustrates how ranking is more closely aligned with the problem at hand compared to classifying. In summary, in the common scenario where decision-makers are constrained by limited resources, deciding upon resource allocation using classification models is inferior to using learning to rank. These findings have important implications for practitioners in a variety of application areas. 

Our work opens several promising directions for future research. For example, it would be interesting to consider a temporal variant of the assignment problem with tasks arriving sequentially in time. Although this problem has been studied extensively for stochastic or random arrival rates \citep{derman1972sequential, albright1972asymptotic, albright1974optimal}, future work could consider the addition of a predictive ranking model to address uncertainty regarding task outcomes. Another possible extension would be to consider tasks that require varying degrees of resources. For example, in credit scoring, loans with a large principal require more resources. Finally, a technical limitation of LambdaMART is the $O(N^2)$ complexity due to the pairwise calculation of the gradient. To address this issue, future work could look at approaches that calculate the gradient in a listwise fashion by considering the entire ranking simultaneously \citep{cao2007learning, xia2008listwise, ravikumar2011ndcg} with several recently proposed, efficient candidates \citep[e.g.,][]{sculley2009large, lucchese2017xdart, cakir2019deep}.

\bibliographystyle{unsrtnat}
\bibliography{references}

\begin{thebibliography}{70}
\providecommand{\natexlab}[1]{#1}
\providecommand{\url}[1]{\texttt{#1}}
\expandafter\ifx\csname urlstyle\endcsname\relax
  \providecommand{\doi}[1]{doi: #1}\else
  \providecommand{\doi}{doi: \begingroup \urlstyle{rm}\Url}\fi

\bibitem[Samuelson and Nordhaus(2010)]{samuelson2010economics}
Paul~A Samuelson and William~D Nordhaus.
\newblock \emph{Economics}.
\newblock McGraw-Hill/Irwin, 19 edition, 2010.

\bibitem[Ward~Jr(1957)]{ward1957optimal}
LE~Ward~Jr.
\newblock On the optimal allocation of limited resources.
\newblock \emph{Operations Research}, 5\penalty0 (6):\penalty0 815--819, 1957.

\bibitem[Everett~III(1963)]{everett1963generalized}
Hugh Everett~III.
\newblock Generalized lagrange multiplier method for solving problems of
  optimum allocation of resources.
\newblock \emph{Operations research}, 11\penalty0 (3):\penalty0 399--417, 1963.

\bibitem[Burkard et~al.(2012)Burkard, Dell'Amico, and
  Martello]{burkard2012assignment}
Rainer Burkard, Mauro Dell'Amico, and Silvano Martello.
\newblock \emph{Assignment problems: revised reprint}.
\newblock SIAM, 2012.

\bibitem[Alonso-Mora et~al.(2017)Alonso-Mora, Samaranayake, Wallar, Frazzoli,
  and Rus]{alonso2017demand}
Javier Alonso-Mora, Samitha Samaranayake, Alex Wallar, Emilio Frazzoli, and
  Daniela Rus.
\newblock On-demand high-capacity ride-sharing via dynamic trip-vehicle
  assignment.
\newblock \emph{Proceedings of the National Academy of Sciences}, 114\penalty0
  (3):\penalty0 462--467, 2017.

\bibitem[Bertsimas et~al.(2019)Bertsimas, Delarue, and
  Martin]{bertsimas2019optimizing}
Dimitris Bertsimas, Arthur Delarue, and Sebastien Martin.
\newblock Optimizing schools’ start time and bus routes.
\newblock \emph{Proceedings of the National Academy of Sciences}, 116\penalty0
  (13):\penalty0 5943--5948, 2019.

\bibitem[Toktas et~al.(2006)Toktas, Yen, and Zabinsky]{toktas2006addressing}
Berkin Toktas, Joyce~W Yen, and Zelda~B Zabinsky.
\newblock Addressing capacity uncertainty in resource-constrained assignment
  problems.
\newblock \emph{Computers \& operations research}, 33\penalty0 (3):\penalty0
  724--745, 2006.

\bibitem[Krokhmal and Pardalos(2009)]{krokhmal2009random}
Pavlo~A Krokhmal and Panos~M Pardalos.
\newblock Random assignment problems.
\newblock \emph{European Journal of Operational Research}, 194\penalty0
  (1):\penalty0 1--17, 2009.

\bibitem[Lodi and Zarpellon(2017)]{lodi2017learning}
Andrea Lodi and Giulia Zarpellon.
\newblock On learning and branching: a survey.
\newblock \emph{Top}, 25\penalty0 (2):\penalty0 207--236, 2017.

\bibitem[Bengio et~al.(2021)Bengio, Lodi, and Prouvost]{bengio2021machine}
Yoshua Bengio, Andrea Lodi, and Antoine Prouvost.
\newblock Machine learning for combinatorial optimization: a methodological
  tour d’horizon.
\newblock \emph{European Journal of Operational Research}, 290\penalty0
  (2):\penalty0 405--421, 2021.

\bibitem[Donti et~al.(2017)Donti, Amos, and Kolter]{donti2017taskbased}
Priya Donti, Brandon Amos, and J.~Zico Kolter.
\newblock Task-based end-to-end model learning in stochastic optimization.
\newblock In I.~Guyon, U.~V. Luxburg, S.~Bengio, H.~Wallach, R.~Fergus,
  S.~Vishwanathan, and R.~Garnett, editors, \emph{Advances in Neural
  Information Processing Systems}, volume~30. Curran Associates, Inc., 2017.
\newblock URL
  \url{https://proceedings.neurips.cc/paper/2017/file/3fc2c60b5782f641f76bcefc39fb2392-Paper.pdf}.

\bibitem[Wilder et~al.(2019)Wilder, Dilkina, and Tambe]{wilder2019melding}
Bryan Wilder, Bistra Dilkina, and Milind Tambe.
\newblock Melding the data-decisions pipeline: Decision-focused learning for
  combinatorial optimization.
\newblock In \emph{Proceedings of the AAAI Conference on Artificial
  Intelligence}, volume~33, pages 1658--1665, 2019.

\bibitem[Elmachtoub and Grigas(2021)]{elmachtoub2021smart}
Adam~N Elmachtoub and Paul Grigas.
\newblock Smart “predict, then optimize”.
\newblock \emph{Management Science}, 2021.

\bibitem[Mandi et~al.(2020)Mandi, Stuckey, Guns, et~al.]{mandi2020smart}
Jayanta Mandi, Peter~J Stuckey, Tias Guns, et~al.
\newblock Smart predict-and-optimize for hard combinatorial optimization
  problems.
\newblock In \emph{Proceedings of the AAAI Conference on Artificial
  Intelligence}, volume~34, pages 1603--1610, 2020.

\bibitem[Kotary et~al.(2021)Kotary, Fioretto, Van~Hentenryck, and
  Wilder]{kotary2021end}
James Kotary, Ferdinando Fioretto, Pascal Van~Hentenryck, and Bryan Wilder.
\newblock End-to-end constrained optimization learning: A survey.
\newblock \emph{arXiv preprint arXiv:2103.16378}, 2021.

\bibitem[Demirovi{\'c} et~al.(2019{\natexlab{a}})Demirovi{\'c}, Stuckey,
  Bailey, Chan, Leckie, Ramamohanarao, and Guns]{demirovic2019investigation}
Emir Demirovi{\'c}, Peter~J Stuckey, James Bailey, Jeffrey Chan, Chris Leckie,
  Kotagiri Ramamohanarao, and Tias Guns.
\newblock An investigation into prediction+ optimisation for the knapsack
  problem.
\newblock In \emph{International Conference on Integration of Constraint
  Programming, Artificial Intelligence, and Operations Research}, pages
  241--257. Springer, 2019{\natexlab{a}}.

\bibitem[Demirovi{\'c} et~al.(2019{\natexlab{b}})Demirovi{\'c}, J~Stuckey,
  Bailey, Chan, Leckie, Ramamohanarao, and Guns]{demirovic2019predict+}
Emir Demirovi{\'c}, Peter J~Stuckey, James Bailey, Jeffrey Chan, Christopher
  Leckie, Kotagiri Ramamohanarao, and Tias Guns.
\newblock Predict+ optimise with ranking objectives: Exhaustively learning
  linear functions.
\newblock In \emph{Proceedings of the Twenty-Eighth International Joint
  Conference on Artificial Intelligence, IJCAI 2019, Macao, China, August
  10-16, 2019}, pages 1078--1085. International Joint Conferences on Artificial
  Intelligence, 2019{\natexlab{b}}.

\bibitem[Van~Vlasselaer et~al.(2017)Van~Vlasselaer, Eliassi-Rad, Akoglu,
  Snoeck, and Baesens]{vanvlasselaer2017gotcha}
V{\'e}ronique Van~Vlasselaer, Tina Eliassi-Rad, Leman Akoglu, Monique Snoeck,
  and Bart Baesens.
\newblock Gotcha! network-based fraud detection for social security fraud.
\newblock \emph{Management Science}, 63\penalty0 (9):\penalty0 3090--3110,
  2017.

\bibitem[Cerioli et~al.(2019)Cerioli, Barabesi, Cerasa, Menegatti, and
  Perrotta]{cerioli2019newcomb}
Andrea Cerioli, Lucio Barabesi, Andrea Cerasa, Mario Menegatti, and Domenico
  Perrotta.
\newblock Newcomb--benford law and the detection of frauds in international
  trade.
\newblock \emph{Proceedings of the National Academy of Sciences}, 116\penalty0
  (1):\penalty0 106--115, 2019.

\bibitem[Baesens et~al.(2003)Baesens, Van~Gestel, Viaene, Stepanova, Suykens,
  and Vanthienen]{baesens2003benchmarking}
Bart Baesens, Tony Van~Gestel, Stijn Viaene, Maria Stepanova, Johan Suykens,
  and Jan Vanthienen.
\newblock Benchmarking state-of-the-art classification algorithms for credit
  scoring.
\newblock \emph{Journal of the operational research society}, 54\penalty0
  (6):\penalty0 627--635, 2003.

\bibitem[Lessmann et~al.(2015)Lessmann, Baesens, Seow, and
  Thomas]{lessmann2015benchmarking}
Stefan Lessmann, Bart Baesens, Hsin-Vonn Seow, and Lyn~C Thomas.
\newblock Benchmarking state-of-the-art classification algorithms for credit
  scoring: An update of research.
\newblock \emph{European Journal of Operational Research}, 247\penalty0
  (1):\penalty0 124--136, 2015.

\bibitem[Baesens et~al.(2002)Baesens, Viaene, Van~den Poel, Vanthienen, and
  Dedene]{baesens2002bayesian}
Bart Baesens, Stijn Viaene, Dirk Van~den Poel, Jan Vanthienen, and Guido
  Dedene.
\newblock Bayesian neural network learning for repeat purchase modelling in
  direct marketing.
\newblock \emph{European Journal of Operational Research}, 138\penalty0
  (1):\penalty0 191--211, 2002.

\bibitem[Verbeke et~al.(2011)Verbeke, Martens, Mues, and
  Baesens]{verbeke2011building}
Wouter Verbeke, David Martens, Christophe Mues, and Bart Baesens.
\newblock Building comprehensible customer churn prediction models with
  advanced rule induction techniques.
\newblock \emph{Expert systems with applications}, 38\penalty0 (3):\penalty0
  2354--2364, 2011.

\bibitem[Verbeke et~al.(2012)Verbeke, Dejaeger, Martens, Hur, and
  Baesens]{verbeke2012new_churn}
Wouter Verbeke, Karel Dejaeger, David Martens, Joon Hur, and Bart Baesens.
\newblock New insights into churn prediction in the telecommunication sector: A
  profit driven data mining approach.
\newblock \emph{European journal of operational research}, 218\penalty0
  (1):\penalty0 211--229, 2012.

\bibitem[Bahnsen et~al.(2014)Bahnsen, Aouada, and
  Ottersten]{bahnsen2014examplelogistic}
Alejandro~Correa Bahnsen, Djamia Aouada, and Bj{\"o}rn Ottersten.
\newblock Example-dependent cost-sensitive logistic regression for credit
  scoring.
\newblock In \emph{2014 13th International Conference on Machine Learning and
  Applications}, pages 263--269. IEEE, 2014.

\bibitem[Petrides et~al.(2020{\natexlab{a}})Petrides, Moldovan, Coenen, Guns,
  and Verbeke]{petrides2020profit_credit}
George Petrides, Darie Moldovan, Lize Coenen, Tias Guns, and Wouter Verbeke.
\newblock Cost-sensitive learning for profit-driven credit scoring.
\newblock \emph{Journal of the Operational Research Society}, pages 1--13,
  2020{\natexlab{a}}.

\bibitem[H{\"o}ppner et~al.(2020)H{\"o}ppner, Stripling, Baesens, vanden
  Broucke, and Verdonck]{hoppner2020profit}
Sebastiaan H{\"o}ppner, Eugen Stripling, Bart Baesens, Seppe vanden Broucke,
  and Tim Verdonck.
\newblock Profit driven decision trees for churn prediction.
\newblock \emph{European journal of operational research}, 284\penalty0
  (3):\penalty0 920--933, 2020.

\bibitem[H{\"o}ppner et~al.(2022)H{\"o}ppner, Baesens, Verbeke, and
  Verdonck]{hoppner2022instance}
Sebastiaan H{\"o}ppner, Bart Baesens, Wouter Verbeke, and Tim Verdonck.
\newblock Instance-dependent cost-sensitive learning for detecting transfer
  fraud.
\newblock \emph{European Journal of Operational Research}, 297\penalty0
  (1):\penalty0 291--300, 2022.

\bibitem[Elkan(2001)]{elkan2001foundations}
Charles Elkan.
\newblock The foundations of cost-sensitive learning.
\newblock In \emph{International joint conference on artificial intelligence},
  volume~17, pages 973--978. Lawrence Erlbaum Associates Ltd, 2001.

\bibitem[Petrides and Verbeke(2021)]{petrides2021csensemble}
George Petrides and Wouter Verbeke.
\newblock Cost-sensitive ensemble learning: a unifying framework.
\newblock \emph{Data Mining and Knowledge Discovery}, pages 1--28, 2021.

\bibitem[Davis and Goadrich(2006)]{davis2006relationship}
Jesse Davis and Mark Goadrich.
\newblock The relationship between precision-recall and roc curves.
\newblock In \emph{Proceedings of the 23rd international conference on Machine
  learning}, pages 233--240, 2006.

\bibitem[Dal~Pozzolo et~al.(2017)Dal~Pozzolo, Boracchi, Caelen, Alippi, and
  Bontempi]{dal2017credit}
Andrea Dal~Pozzolo, Giacomo Boracchi, Olivier Caelen, Cesare Alippi, and
  Gianluca Bontempi.
\newblock Credit card fraud detection: a realistic modeling and a novel
  learning strategy.
\newblock \emph{IEEE transactions on neural networks and learning systems},
  29\penalty0 (8):\penalty0 3784--3797, 2017.

\bibitem[Bose and Chen(2009)]{bose2009quantitative}
Indranil Bose and Xi~Chen.
\newblock Quantitative models for direct marketing: A review from systems
  perspective.
\newblock \emph{European Journal of Operational Research}, 195\penalty0
  (1):\penalty0 1--16, 2009.

\bibitem[Hadden et~al.(2007)Hadden, Tiwari, Roy, and Ruta]{hadden2007computer}
John Hadden, Ashutosh Tiwari, Rajkumar Roy, and Dymitr Ruta.
\newblock Computer assisted customer churn management: State-of-the-art and
  future trends.
\newblock \emph{Computers \& Operations Research}, 34\penalty0 (10):\penalty0
  2902--2917, 2007.

\bibitem[Yang et~al.(2014)Yang, Tang, and Yao]{yang2014learning}
Xiaoxing Yang, Ke~Tang, and Xin Yao.
\newblock A learning-to-rank approach to software defect prediction.
\newblock \emph{IEEE Transactions on Reliability}, 64\penalty0 (1):\penalty0
  234--246, 2014.

\bibitem[Coenen et~al.(2020)Coenen, Verbeke, and Guns]{coenen2020machine}
Lize Coenen, Wouter Verbeke, and Tias Guns.
\newblock Machine learning methods for short-term probability of default: A
  comparison of classification, regression and ranking methods.
\newblock \emph{Journal of the Operational Research Society}, pages 1--16,
  2020.

\bibitem[Devriendt et~al.(2020)Devriendt, Van~Belle, Guns, and
  Verbeke]{devriendt2020learning}
Floris Devriendt, Jente Van~Belle, Tias Guns, and Wouter Verbeke.
\newblock Learning to rank for uplift modeling.
\newblock \emph{IEEE Transactions on Knowledge and Data Engineering}, 2020.

\bibitem[McBride et~al.(2019)McBride, Wang, Ren, and Li]{mcbride2019cost}
Ryan McBride, Ke~Wang, Zhouyang Ren, and Wenyuan Li.
\newblock Cost-sensitive learning to rank.
\newblock In \emph{Proceedings of the AAAI Conference on Artificial
  Intelligence}, volume~33, pages 4570--4577, 2019.

\bibitem[Murphy(2012)]{murphy2012machine}
Kevin~P Murphy.
\newblock \emph{Machine learning: a probabilistic perspective}.
\newblock MIT press, 2012.

\bibitem[Wang et~al.(2013)Wang, Wang, Li, He, Chen, and
  Liu]{wang2013theoretical}
Yining Wang, Liwei Wang, Yuanzhi Li, Di~He, Wei Chen, and Tie-Yan Liu.
\newblock A theoretical analysis of ndcg ranking measures.
\newblock In \emph{Proceedings of the 26th annual conference on learning theory
  (COLT 2013)}, volume~8, page~6. Citeseer, 2013.

\bibitem[Valizadegan et~al.(2009)Valizadegan, Jin, Zhang, and
  Mao]{valizadegan2009learning}
Hamed Valizadegan, Rong Jin, Ruofei Zhang, and Jianchang Mao.
\newblock Learning to rank by optimizing ndcg measure.
\newblock In \emph{NIPS}, volume~22, pages 1883--1891, 2009.

\bibitem[Wu et~al.(2008)Wu, Burges, Svore, and Gao]{wu2008ranking}
Qiang Wu, Chris~JC Burges, Krysta~M Svore, and Jianfeng Gao.
\newblock Ranking, boosting, and model adaptation.
\newblock Technical report, Technical report, Microsoft Research, 2008.

\bibitem[Burges(2010)]{burges2010ranknet}
Christopher~JC Burges.
\newblock From ranknet to lambdarank to lambdamart: An overview.
\newblock \emph{Learning}, 11\penalty0 (23-581):\penalty0 81, 2010.

\bibitem[Burges et~al.(2006)Burges, Ragno, and Le]{burges2006learning}
Christopher Burges, Robert Ragno, and Quoc Le.
\newblock Learning to rank with nonsmooth cost functions.
\newblock \emph{Advances in neural information processing systems},
  19:\penalty0 193--200, 2006.

\bibitem[Friedman(2001)]{friedman2001greedy}
Jerome~H Friedman.
\newblock Greedy function approximation: a gradient boosting machine.
\newblock \emph{Annals of statistics}, pages 1189--1232, 2001.

\bibitem[Chapelle and Chang(2011)]{chapelle2011yahoo}
Olivier Chapelle and Yi~Chang.
\newblock Yahoo! learning to rank challenge overview.
\newblock In \emph{Proceedings of the learning to rank challenge}, pages 1--24.
  PMLR, 2011.

\bibitem[Li(2014)]{li2014learning}
Hang Li.
\newblock Learning to rank for information retrieval and natural language
  processing.
\newblock \emph{Synthesis lectures on human language technologies}, 7\penalty0
  (3):\penalty0 1--121, 2014.

\bibitem[Chen et~al.(2015)Chen, He, Benesty, Khotilovich, Tang, Cho,
  et~al.]{chen2015xgboost}
Tianqi Chen, Tong He, Michael Benesty, Vadim Khotilovich, Yuan Tang, Hyunsu
  Cho, et~al.
\newblock Xgboost: extreme gradient boosting.
\newblock \emph{R package version 0.4-2}, 1\penalty0 (4):\penalty0 1--4, 2015.

\bibitem[Gunnarsson et~al.(2021)Gunnarsson, Vanden~Broucke, Baesens,
  {\'O}skarsd{\'o}ttir, and Lemahieu]{gunnarsson2021deep}
Bj{\"o}rn~Rafn Gunnarsson, Seppe Vanden~Broucke, Bart Baesens, Mar{\'\i}a
  {\'O}skarsd{\'o}ttir, and Wilfried Lemahieu.
\newblock Deep learning for credit scoring: Do or don’t?
\newblock \emph{European Journal of Operational Research}, 295\penalty0
  (1):\penalty0 292--305, 2021.

\bibitem[Bahnsen et~al.(2015{\natexlab{a}})Bahnsen, Aouada, and
  Ottersten]{bahnsen2015novel}
Alejandro~Correa Bahnsen, Djamila Aouada, and Bj{\"o}rn Ottersten.
\newblock A novel cost-sensitive framework for customer churn predictive
  modeling.
\newblock \emph{Decision Analytics}, 2\penalty0 (1):\penalty0 1--15,
  2015{\natexlab{a}}.

\bibitem[Bahnsen et~al.(2015{\natexlab{b}})Bahnsen, Aouada, and
  Ottersten]{bahnsen2015exampletree}
Alejandro~Correa Bahnsen, Djamila Aouada, and Bj{\"o}rn Ottersten.
\newblock Example-dependent cost-sensitive decision trees.
\newblock \emph{Expert Systems with Applications}, 42\penalty0 (19):\penalty0
  6609--6619, 2015{\natexlab{b}}.

\bibitem[{IBM Sample Data Sets}(2017)]{kaggle2017telco}
{IBM Sample Data Sets}.
\newblock Telco customer churn, version 1, 2017.
\newblock Retrieved October 10, 2021 from
  https://www.kaggle.com/blastchar/telco-customer-churn/version/1.

\bibitem[Baesens et~al.(2016)Baesens, Roesch, and Scheule]{baesens2016credit}
Bart Baesens, Daniel Roesch, and Harald Scheule.
\newblock \emph{Credit risk analytics: Measurement techniques, applications,
  and examples in SAS}.
\newblock John Wiley \& Sons, 2016.

\bibitem[Petrides et~al.(2020{\natexlab{b}})Petrides, Moldovan, Coenen, Guns,
  and Verbeke]{petrides2020cost}
George Petrides, Darie Moldovan, Lize Coenen, Tias Guns, and Wouter Verbeke.
\newblock Cost-sensitive learning for profit-driven credit scoring.
\newblock \emph{Journal of the Operational Research Society}, pages 1--13,
  2020{\natexlab{b}}.

\bibitem[Yeh and Lien(2009)]{yeh2009comparisons}
I-Cheng Yeh and Che-hui Lien.
\newblock The comparisons of data mining techniques for the predictive accuracy
  of probability of default of credit card clients.
\newblock \emph{Expert Systems with Applications}, 36\penalty0 (2):\penalty0
  2473--2480, 2009.

\bibitem[Moro et~al.(2014)Moro, Cortez, and Rita]{moro2014data}
S{\'e}rgio Moro, Paulo Cortez, and Paulo Rita.
\newblock A data-driven approach to predict the success of bank telemarketing.
\newblock \emph{Decision Support Systems}, 62:\penalty0 22--31, 2014.

\bibitem[Dal~Pozzolo et~al.(2015)Dal~Pozzolo, Caelen, Johnson, and
  Bontempi]{dal2015calibrating}
Andrea Dal~Pozzolo, Olivier Caelen, Reid~A Johnson, and Gianluca Bontempi.
\newblock Calibrating probability with undersampling for unbalanced
  classification.
\newblock In \emph{2015 IEEE Symposium Series on Computational Intelligence},
  pages 159--166. IEEE, 2015.

\bibitem[Van~Vlasselaer et~al.(2015)Van~Vlasselaer, Bravo, Caelen, Eliassi-Rad,
  Akoglu, Snoeck, and Baesens]{vanvlasselaer2015apate}
V{\'e}ronique Van~Vlasselaer, Cristi{\'a}n Bravo, Olivier Caelen, Tina
  Eliassi-Rad, Leman Akoglu, Monique Snoeck, and Bart Baesens.
\newblock Apate: A novel approach for automated credit card transaction fraud
  detection using network-based extensions.
\newblock \emph{Decision Support Systems}, 75:\penalty0 38--48, 2015.

\bibitem[Dem{\v{s}}ar(2006)]{demvsar2006statistical}
Janez Dem{\v{s}}ar.
\newblock Statistical comparisons of classifiers over multiple data sets.
\newblock \emph{The Journal of Machine Learning Research}, 7:\penalty0 1--30,
  2006.

\bibitem[Garcia and Herrera(2008)]{garcia2008extension}
Salvador Garcia and Francisco Herrera.
\newblock An extension on" statistical comparisons of classifiers over multiple
  data sets" for all pairwise comparisons.
\newblock \emph{Journal of machine learning research}, 9\penalty0 (12), 2008.

\bibitem[Garc{\'\i}a et~al.(2010)Garc{\'\i}a, Fern{\'a}ndez, Luengo, and
  Herrera]{garcia2010advanced}
Salvador Garc{\'\i}a, Alberto Fern{\'a}ndez, Juli{\'a}n Luengo, and Francisco
  Herrera.
\newblock Advanced nonparametric tests for multiple comparisons in the design
  of experiments in computational intelligence and data mining: Experimental
  analysis of power.
\newblock \emph{Information sciences}, 180\penalty0 (10):\penalty0 2044--2064,
  2010.

\bibitem[Derman et~al.(1972)Derman, Lieberman, and Ross]{derman1972sequential}
Cyrus Derman, Gerald~J Lieberman, and Sheldon~M Ross.
\newblock A sequential stochastic assignment problem.
\newblock \emph{Management Science}, 18\penalty0 (7):\penalty0 349--355, 1972.

\bibitem[Albright and Derman(1972)]{albright1972asymptotic}
Chris Albright and Cyrus Derman.
\newblock Asymptotic optimal policies for the stochastic sequential assignment
  problem.
\newblock \emph{Management Science}, 19\penalty0 (1):\penalty0 46--51, 1972.

\bibitem[Albright(1974)]{albright1974optimal}
S~Christian Albright.
\newblock Optimal sequential assignments with random arrival times.
\newblock \emph{Management Science}, 21\penalty0 (1):\penalty0 60--67, 1974.

\bibitem[Cao et~al.(2007)Cao, Qin, Liu, Tsai, and Li]{cao2007learning}
Zhe Cao, Tao Qin, Tie-Yan Liu, Ming-Feng Tsai, and Hang Li.
\newblock Learning to rank: from pairwise approach to listwise approach.
\newblock In \emph{Proceedings of the 24th international conference on Machine
  learning}, pages 129--136, 2007.

\bibitem[Xia et~al.(2008)Xia, Liu, Wang, Zhang, and Li]{xia2008listwise}
Fen Xia, Tie-Yan Liu, Jue Wang, Wensheng Zhang, and Hang Li.
\newblock Listwise approach to learning to rank: theory and algorithm.
\newblock In \emph{Proceedings of the 25th international conference on Machine
  learning}, pages 1192--1199, 2008.

\bibitem[Ravikumar et~al.(2011)Ravikumar, Tewari, and Yang]{ravikumar2011ndcg}
Pradeep Ravikumar, Ambuj Tewari, and Eunho Yang.
\newblock On ndcg consistency of listwise ranking methods.
\newblock In \emph{Proceedings of the Fourteenth International Conference on
  Artificial Intelligence and Statistics}, pages 618--626. JMLR Workshop and
  Conference Proceedings, 2011.

\bibitem[Sculley(2009)]{sculley2009large}
D~Sculley.
\newblock Large scale learning to rank.
\newblock 2009.

\bibitem[Lucchese et~al.(2017)Lucchese, Nardini, Orlando, Perego, and
  Trani]{lucchese2017xdart}
Claudio Lucchese, Franco~Maria Nardini, Salvatore Orlando, Raffaele Perego, and
  Salvatore Trani.
\newblock X-dart: blending dropout and pruning for efficient learning to rank.
\newblock In \emph{Proceedings of the 40th International ACM SIGIR Conference
  on Research and Development in Information Retrieval}, pages 1077--1080,
  2017.

\bibitem[Cakir et~al.(2019)Cakir, He, Xia, Kulis, and Sclaroff]{cakir2019deep}
Fatih Cakir, Kun He, Xide Xia, Brian Kulis, and Stan Sclaroff.
\newblock Deep metric learning to rank.
\newblock In \emph{Proceedings of the IEEE/CVF Conference on Computer Vision
  and Pattern Recognition}, pages 1861--1870, 2019.

\end{thebibliography}

\end{document}